\documentclass[letterpaper, 10 pt, conference]{ieeeconf}  

\usepackage{xcolor}
\usepackage{amsmath}
\usepackage{amsthm}
\usepackage{amsfonts,amssymb,mathtools}
\usepackage{algorithm}
\usepackage{algpseudocode}
\usepackage{cleveref}

\newcommand{\KK}{\mathcal{K}}

\newcommand{\XX}{\mathcal{X}}

\newcommand{\real}{\mathbb{R}}

\newcommand{\ie}{\textit{i.e. }}
\newcommand{\st}{\textit{s.t. }}

\crefname{asp}{assumption}{assumptions}
\Crefname{asp}{Assumption}{Assumptions}
\crefname{lem}{lemma}{lemmas}
\Crefname{lem}{Lemma}{Lemmas}

\theoremstyle{definition}
\newtheorem{thm}{Theorem}
\newtheorem{lem}[thm]{Lemma}

\newtheorem*{cor}{Corollary}

\theoremstyle{definition}
\newtheorem{defn}{Definition}

\newtheorem{exmp}{Example}

\theoremstyle{remark}
\newtheorem*{rem}{Remark}

\theoremstyle{remark}

\newcommand{\rre}[1]{{\color{black} #1}}

\usepackage[utf8]{inputenc}
\usepackage{comment}
\usepackage{subfigure}
\usepackage{graphicx}
\usepackage[normalem]{ulem}
\usepackage{booktabs}

\usepackage{cleveref}

\IEEEoverridecommandlockouts                              

\overrideIEEEmargins                                      




\title{\LARGE \bf
Robust Safe Control with Multi-Modal Uncertainty}

\author{Tianhao Wei$^{1}$, Liqian Ma$^{2}$, Ravi Pandya$^{1}$, and Changliu Liu$^{1}$
\thanks{*This material is based upon work supported by the National Science Foundation under Grant No. 2144489.}
\thanks{$^{1}$These authors are with Robotics Institute, Carnegie Mellon University, {\tt\small twei2, rapandya, cliu6@andrew.cmu.edu}}%
\thanks{$^{2}$Liqian Ma is with the Department of Mechanical Engineering, Tsinghua University. Work done during an internship at Carnegie Mellon. {\tt\small mlq19@mails.tsinghua.edu.cn}}
}

\begin{document}

\maketitle
\thispagestyle{empty}
\pagestyle{empty}

\begin{abstract}
Safety in dynamic systems with prevalent uncertainties is crucial. Current robust safe controllers, designed primarily for uni-modal uncertainties, may be either overly conservative or unsafe when handling multi-modal uncertainties. To address the problem, we introduce a novel framework for robust safe control, tailored to accommodate multi-modal Gaussian dynamics uncertainties and control limits.

We first present an innovative method for deriving the least conservative robust safe control under additive multi-modal uncertainties. Next, we propose a strategy to identify a locally least-conservative robust safe control under multiplicative uncertainties. Following these, we introduce a unique safety index synthesis method. This provides the foundation for a robust safe controller that ensures a high probability of realizability under control limits and multi-modal uncertainties.

Experiments on a simulated Segway validate our approach, showing consistent realizability and less conservatism than controllers designed using uni-modal uncertainty methods. The framework offers significant potential for enhancing safety and performance in robotic applications.
\end{abstract}

\section{Introduction}

Robotic systems prioritize safety as a critical element. Safe control, serving as the system's final defense, ensures real-time safety by maintaining the system state within a safety set, a concept known as forward invariance~\cite{liu2014control}.
Energy-function-based methods have been proposed to facilitate safe control~\cite{wei2019safe}. These methods employ the energy function, also referred to as the safety index, barrier function, or Lyapunov function, to convert the safe control problem into online quadratic programming (QP). Both deterministic~\cite{wei2022safe, zhao2023safety, wei2023zero} and uncertain~\cite{chen2022safe, zhao2023probabilistic, wei2022persistently} dynamic models have been extensively explored in QP-based safe control.

However, existing works on robust safe control predominantly assume uni-modal uncertainties~\cite{garg2021robust, jankovic2018robust, grover2022control, castaneda2021pointwise}. In practical applications, multi-modal uncertainty is common. For instance, a car at an intersection can either go straight, turn left, or turn right, instead of moving in arbitrary directions. Applying uni-modal methods to tackle multi-modal uncertainty can lead to excessively conservative or dangerous behaviors. Furthermore, the conservativeness of the uni-modal approach can result in unrealizable control in the presence of control limits. It remains challenging to design a robust safe controller under multi-modal uncertainties that is both persistently feasible and non-conservative.

The major challenge of dealing with multi-modal uncertainties is how to design the control-space safety constraint for each mode, such that the control is least conservative and the probability of safety is high.
Consider a scenario where a pedestrian tries to avoid an oncoming car with two modes of uncertainty: ``turn left" and ``turn right". Including both modes can paralyze the pedestrian's decision-making. However, if we ascertain that the car has a $99\%$ probability of turning left, it is safe to ignore the ``turn right" mode while still ensuring a high probability of safety. Additionally, synthesizing a safety index that guarantees feasible control across most states becomes challenging with multi-modal uncertainty, as many assumptions made in previous methods, like the continuity of the dynamic model~\cite{zhao2023probabilistic} or Gaussian process uncertainty~\cite{wei2022persistently}, no longer hold.

\begin{figure}
    \centering
    \includegraphics[width=\linewidth]{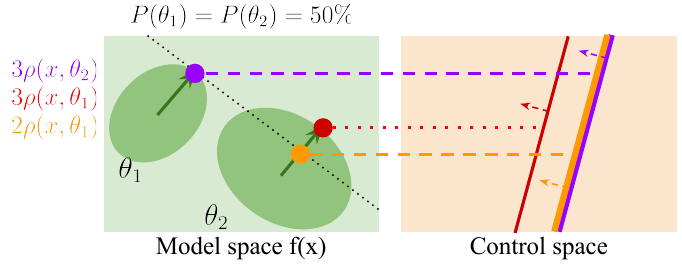}
    \caption{Illustration of a dynamic model under additive uncertainty and the corresponding safe control sets. Each possible dynamic model corresponds to a linear constraint in the control space. The green arrows show the gradient direction of the safety index ($\nabla\phi$). The black dotted line denotes the contour line of the Lie derivative ($\nabla\phi\cdot f$). The arrows in the control space show the corresponding safe half space.  When the desired safe probability is $97.5\%$ and \textcolor[RGB]{153,0,255}{$k_1=3$}, we can relax \textcolor[RGB]{255,153,0}{$k_2=2$}, which gives a looser constraint than \textcolor[RGB]{204,0,0}{$k_2=3$}. Therefore we can have a larger feasible control region.}
    \label{fig:additive}
\end{figure}
Our approach comprises three key components. Firstly, in the additive uncertainty case (``additive" vs ``multiplicative" refers to how the uncertainty enters into the system dynamics with respect to the control input), we demonstrate how to solve the multi-modal robust safe control problem by formulating an optimization with two constraints, one constraint on the overall safe probability and one constraint on control safety derived from all modes. We show that the constraints are scalarly monotonic, hence the least conservative safe control can be found by binary search. Secondly, In the multiplicative case, the constraints are no longer monotonic, therefore we propose a bi-level optimization to find a locally least conservative safe control. The upper level, optimizes the confidence level of each mode with numerical methods, while the lower level, which is a second-order cone program (SOCP), finds the locally least conservative safe control given the confidence levels. Lastly, we provide a method to offer a probabilistic guarantee of feasibility for a given safety index and illustrate how to synthesize a persistently feasible safety index.

Our method significantly tightens the bounds compared to naive methods that directly overapproximate multi-modal uncertainty as uni-modal uncertainty. Our method maximizes the feasible control region under additive multi-modal uncertainty and enlarges the feasible control region even under multiplicative uncertainty. Our method also ensures a probabilistic guarantee of the persistent feasibility of the safety index under multi-modal uncertainty.

The remainder of the paper unfolds as follows: \Cref{sec: formulation} lays out the problem and introduces the necessary notations. \Cref{sec: method} discusses the proposed multi-modal robust safe control framework. \Cref{sec: exp} evaluates the proposed methods on a Segway robot.

\section{Problem Formulation}\label{sec: formulation}

We consider the following nonlinear control-affine dynamic system with multi-modal uncertainty:\footnote{Although our method assumes control affine dynamics, it is applicable to non-control affine systems since we can always have a control affine form through dynamics extension~\cite{liu2016algorithmic}.}
\begin{align}
    \dot x = f(x,\theta) + g(x, \theta) u, \label{eq: dyn}
\end{align}
where state $x \in X \subseteq \real^{n}$ and control $u \in U \subseteq \real^{m}$. $\theta$ follows a known discrete probability distribution over $\{\theta_1, \ldots, \theta_n \}$. 
The terms $f(x,\theta) \in \real^{n}$ and $g(x,\theta) \in \real^{n\times m}$ are random matrices that follow multi-modal normal distributions, that is, $f(x,\theta_i)\sim \mathcal{N}(\mu_f(x,\theta_i), \Sigma_f(x,\theta_i))$, $g(x,\theta_i)\sim \mathcal{N}(\mu_g(x,\theta_i), \Sigma_g(x,\theta_i))$.

We consider the safety specification as a requirement that the system state should be constrained in a closed and connected set $\XX_S \subseteq X$, which we call the safe set. We assume $\XX_S$ is the zero-sublevel set of a scalar safety measure $\phi_0: X \mapsto \real$ given by the user. That is $\XX_S = \{x \mid \phi_0(x) \leq 0, x \in X\}$.
Constraining states inside $\XX_S$ can be expressed as a \textit{forward invariance} problem: when $x(t_0) \in \XX_S$, ensure $x(t) \in \XX_S,\ \forall t>t_0$. Forward invariance can be guaranteed with minimal invasion by QP-based safe set algorithms.
\begin{defn}[QP-based safe set algorithms]\label{algo: ssa}
Given a reference control $u_{\text{ref}}$ and a safety index $\phi: X \mapsto \real$, QP-based safe set algorithms find safe control $u$ by: 
\begin{align}
    &\min_{u\in U} \|u - u_{\text{ref}}\|^2\ \\
    \st &\dot \phi(x,u) \leq -\gamma(\phi(x)),\label{eq: lie}
\end{align}
where 
$\dot \phi(x,u) := \nabla \phi(x) \dot x$, $\gamma$ is a piecewise smooth function and $\gamma(\rre{\phi(x)}) > 0$ when $\phi(x) > 0$. $\gamma$ can be non-continuous and designed differently.
A special case of $\gamma$ is an extended class $\KK$ function on $\phi(x)$, corresponding to the control barrier function (CBF) method~\cite{wei2019safe}. 
Safe set algorithms require $\phi$ to be \textit{persistent feasible}: $\forall x, \exists u$, such that $\dot \phi(x,u) \leq -\gamma(\rre{\phi(x)})$. However, a user-defined safety index $\phi_0$ may not be naturally persistently feasible. It is often necessary to design a new safety index $\phi$ based on $\phi_0$ so that we can enforce forward invariance in $S: \{\phi \leq 0\} \subseteq \XX_S$~\cite{liu2014control}. 
\end{defn}

Extending \cref{algo: ssa} to probabilistic uncertainty requires the safety constraint to hold almost surely. We define the extended problem as \textit{Chance Constrained Safe Control}.
\begin{defn} (Chance constrained safe control) For a small $\epsilon_f$, we define the following problem:
\begin{subequations}
\begin{align}
    &\min_{u\in U}  \|u - u_{\text{ref}}\|^2 \\  
    &\st P\left(\dot\phi(x,u) \leq -\gamma(\phi(x))\right) \geq 1-\epsilon_f.
\end{align}\label{eq: prob_con}
\end{subequations}
\end{defn}
In the multi-modal uncertainty case, \eqref{eq: prob_con} equals to:
\begin{subequations}
\begin{align}
    & \min_{u\in U}  \|u - u_{\text{ref}}\|^2 \\  
    & \st \sum_i P(\theta_i) p_i \geq 1-\epsilon_f. 
\end{align}\label{eq: mmp-qp}
\end{subequations}
where 
\begin{align}
    p_i := P\left(\dot\phi(x,u) \leq -\gamma(\phi) \mid \theta_i \right)
\end{align}
are design variables. We can increase $p_i$ by considering a wider range of uncertainty for mode $i$, but a wider range of uncertainty may lead to more conservative control.  

We say $\phi$ is a \textit{probabilistically robust safety index} (PR-SI) if it ensures feasibility with a high probability.
\begin{defn}[PR-SI]\label{def:PR-SI}
A safety index $\phi$ is a PR-SI if there exists a piecewise smooth, strictly increasing function $\gamma$, and $\gamma(0)=0$, such that for $(1-\epsilon_s)\times 100\%$ states, the \textit{feasibility} condition holds with a high probability ($1-\epsilon_f$). 

We define a random event A: given a state $x$,  a control $u$ is a feasible control, that is $\dot \phi(x,u) \leq -\gamma(\phi(x))$. For convenience, we may say a state is \textit{feasible} if the feasibility condition holds with high probability, that is $P(A) > 1-\epsilon_f$. We define another random event B: More than $(1-\epsilon_s)\times 100\%$ states are feasible. A PR-SI ensures that $P(B) > 1-\epsilon_c$. For example, we want to ensure that 
$P(\text{more than } 99.99\% \text{ states are feasible}) > 99.99\%$.
\end{defn}

This paper first aims to develop a method to obtain such a $\phi$ that satisfies \cref{def:PR-SI}. Then we aim to develop a method to solve \eqref{eq: mmp-qp} efficiently and non-conservatively. A naive approach is modeling all uncertainties with a single Gaussian, such that there will be only one mode and one constraint. Then uni-modal algorithms can be applied~\cite{wei2022persistently}. However, this approach is conservative because modeling all uncertainties with a single Gaussian inevitably leads to more conservative control as shown in \cref{fig:exp-add}. In this work, we propose a non-conservative method that explicitly considers the multi-modal uncertainty.

\section{Multi-Modal Robust Safe Set Algorithm} \label{sec: method}

This section first presents an efficient and tight solver of \eqref{eq: mmp-qp} under multi-modal uncertainty, which gives an optimal solution of \eqref{eq: mmp-qp} for additive uncertainties and a locally optimal solution for multiplicative uncertainties. We then present a safety index synthesis method that improves feasibility for arbitrary dynamic models and arbitrary uncertainties.

\subsection{Multi-Modal Additive Uncertainty}\label{sec: add}

We first consider the case where only $f(x)$ has uncertainty. That is, $\forall x, \mu_g(x,\theta_i) = 0, \Sigma_g(x, \theta_i) = 0.$ It is difficult to directly optimize $p_i$ to balance safety and conservativeness. Therefore, we introduce a variable $k_i$. $f(x, \theta_i)$ has the probability of $p_i$ to fall within $k_i \sigma$ ($k_i$ standard deviation) bound around the mean $\mu_f(x,\theta_i)$. Then we show how to design $p_i$ with $k_i$. 

The safety constraint can be transformed as follows:
\begin{align}
    & \nabla \phi(x) [f(x,\theta) + g(x) u] \leq -\gamma(\phi(x))\\
    & \nabla \phi(x) g(x) u \leq -\gamma(\phi(x)) - \nabla \phi(x) f(x,\theta).
\end{align}

Because we assume Gaussian uncertainty, we consider the confidence bound for each mode to be an ellipsoid. Given the ellipsoid bound of $f(x,\theta_i)$, the safety constraint in the control space is determined by the projection of the ellipsoid to $\nabla\phi$. Let $\rho(x,\theta)$ denote the worst-case direction in the $1\sigma$ ellipsoid, \ie
\begin{align}
    \rho(x,\theta) := \max_{\delta^T\Sigma_f^{-1}(x,\theta)\delta \leq 1} \nabla\phi \cdot \delta
\end{align}
where $\delta = f(x,\theta) - \mu_f(x,\theta)$. The $k_i\sigma$ worst case situation is $k_i\rho(x,\theta_i)$. Then the safety constraint for mode $i$ that considers $k_i\sigma$ range of uncertainty can be written as
\begin{align}
\nabla\phi g(x) u\leq -\gamma(\phi) - \nabla\phi \mu_f(x,\theta_i) - k_i \rho(x,\theta_i), \forall i.
\end{align}

Note that $k_i = \sqrt{\chi^2_1 (p_i)}$, where $\chi^2_n(p)$ represents the quantile function of the Chi-squared distribution with $n$ degrees of freedom. As a practical example, for $p_i = 99.73\%$, $k_i=3$, which is the well-established $3 \sigma$ rule in statistics. We can let $k_i=3$ for all modes, in which case, the probability $P(A)$ is guaranteed to be larger than $99.73\%$. However, such assigning may result in overly conservative control. As shown in \cref{fig:additive}, each possible dynamic model corresponds to a safety constraint in the control space. The safety constraint only depends on the most conservative one. If the desired safe probability is $97.5\%$,  whenever $k_1=3$, we can let $k_2=2$, which gives a looser constraint and still keeps a high probability of safety.

With $p_i(k_i) = {\chi^2_1}^{-1}(k_i^2)$, the multi-modal safe control problem \eqref{eq: mmp-qp} can be formulated as follows
\begin{subequations}
\begin{align}
    & \min_{u\in U, k_1\cdots, k_n} \|u - u_{\text{ref}}\|^2 
 \label{eq:bilevel_obj}\\  
    \st &  \sum_i P(\theta_i) p_i(k_i) \geq 1-\epsilon_f ,\label{eq:bilevel_prob}\\
    & \nabla\phi g(x) u \leq \min_{i} \{ -\gamma(\phi) + o(x,\theta_i,k_i)\}\label{eq:bilevel_safe}.
\end{align}\label{eq:bilevel}
\end{subequations}
where $o(x,\theta_i,k_i) := -\nabla\phi \mu_f(x,\theta_i) - k_i\rho(x,\theta_i)$. Easy to see that the safety constraint only depends on the lowest $o(x,\theta_i,k_i)$. The least conservative safe control is obtained when the RHS of \eqref{eq:bilevel_safe} is maximized with a set of $k_i$ that satisfies \eqref{eq:bilevel_prob}. 


\begin{algorithm}
\caption{Binary search for optimal additive uncertainty}\label{algo: binary}
\begin{algorithmic}
\State $l_{k1} \gets 0, r_{k1} \gets 10$
\While{$r_{k1} - l_{k1} \geq \epsilon_0$}
    \State $k_1 = (l_{k1} + r_{k1})/2$
    \State Compute $o(x,\theta_1,k_1).$
    \State Solve $k_i$ for all $i$ based on $o(x,\theta_1,k_1) = o(x,\theta_i,k_i)$.
    \If{$\sum_i P(\theta_i)p_i > 1-\epsilon_f$} 
        \State $r_{k1} = k_1$ \Comment{Can have smaller $k_1$}
    \Else
        \State $l_{k1} = k_1$ \Comment{Need larger $k_1$}
    \EndIf
\EndWhile
\State Return $o(x,\theta_i,k_i)$ computed with $k_1 = r_{k1}$.
\end{algorithmic}
\end{algorithm}



\begin{lem}\label{lem:sufficient}
    The RHS of \eqref{eq:bilevel_safe} is maximized $\iff$
    \begin{align}
    & \sum_i P(\theta_i)p_i(k_i) = 1-\epsilon_f \label{eq: P_equal}\\
    & o(x,\theta_1,k_1) = o(x,\theta_i,k_i), \forall i \label{eq: o_equal}
    \end{align}

    \begin{proof}
        1) RHS is maximized $\implies$ \eqref{eq: P_equal}: $p_i$ is a monotonically increasing function of $k_i$. Therefore if $\sum_i P(\theta_i)p_i > 1-\epsilon_f$, we can decrease all $k_i$ simultaneously, which guarantees $\min_i o(x,\theta_i, k_i)$ increase.
        
        2) RHS is maximized $\implies$ \eqref{eq: o_equal}: Suppose $\min_i o(x,\theta_i,k_i) < \max_j o(x,\theta_j,k_j)$, then we can decrease $k_i$ and  increase $k_j$ such that $\min_i o(x,\theta_i,k_i)$ increases while keeping $\sum_i P(\theta_i)p_i(k_i)$ unchanged.
        
        3) RHS is maximized $\impliedby$ \eqref{eq: P_equal}, \eqref{eq: o_equal}. We prove by contradiction. Suppose we denote the global optima by $\{k_i^*\}$ and there exists a local optima denoted by $\{\hat k_i\}$. Such that $o(x,\theta_i,k_i^*) > o(x,\theta_i,\hat k_i)$ and they both satisfy \eqref{eq: P_equal} and \eqref{eq: o_equal}.  Because $o(x,\theta_i,k_i)$, $p_i(k_i)$ are monotonic functions of $k_i$.
        We have for all $i$, $o(x,\theta_i,k_i^*) > o(x,\theta_i,\hat k_i) \implies k_i^* < \hat k_i \implies p_i^* < \hat p_i$. Therefore $P(\theta_i)p_i^* < \sum_i P(\theta_i)\hat p_i $, contradicts with the fact that $\sum_i P(\theta_i)p_i^* = 1 - \epsilon_f = \sum_i P(\theta_i)\hat p_i$ at optima.
    \end{proof}
\end{lem}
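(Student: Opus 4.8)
The plan is to strip \eqref{eq:bilevel_safe} down to the optimization it encodes. Since its right-hand side is $-\gamma(\phi)+\min_i o(x,\theta_i,k_i)$ and $-\gamma(\phi)$ is a constant, maximizing it is exactly the max-min program
\[
\max_{k_1,\dots,k_n}\ \min_i o(x,\theta_i,k_i)\quad\text{s.t.}\quad \sum_i P(\theta_i)\,p_i(k_i)\ge 1-\epsilon_f,\ k_i\ge 0 .
\]
Everything hinges on two monotonicities: $p_i(k_i)={\chi^2_1}^{-1}(k_i^2)=2\Phi(k_i)-1$ is strictly increasing in $k_i$, while $o(x,\theta_i,k_i)=-\nabla\phi\,\mu_f(x,\theta_i)-k_i\rho(x,\theta_i)$ is strictly decreasing in $k_i$ whenever the mode is nondegenerate, i.e. $\rho(x,\theta_i)>0$. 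Thus every mode faces the same trade-off: buying probability (larger $k_i$) costs conservativeness (smaller $o_i$). I would also note in passing that $p_i$ is concave in $k_i$, so the feasible set of \eqref{eq:bilevel_prob} is convex and the objective $\min_i o_i$ is concave; this reassures us that the local-improvement arguments below actually certify a global optimum.

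For the forward direction (optimality $\Rightarrow$ \eqref{eq: P_equal} and \eqref{eq: o_equal}) I would use local perturbations. If \eqref{eq:bilevel_prob} were slack, I could decrease every $k_i$ by a common small step: by monotonicity this strictly raises each $o_i$, hence raises $\min_i o_i$, while $\sum_i P(\theta_i)p_i(k_i)$ moves continuously and stays feasible for a small enough step, contradicting optimality, so \eqref{eq: P_equal} must hold. Next, if the $o_i$ were not all equal, pick a minimizing mode $i$ and a mode $j$ with $o_j>\min_\ell o_\ell$, and perform a budget-preserving exchange: decrease $k_i$ a little (raising $o_i$) and increase $k_j$ a little (lowering $o_j$), with the two increments matched via continuity so that $\sum_\ell P(\theta_\ell)p_\ell(k_\ell)$ is held exactly at $1-\epsilon_f$. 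For a small enough exchange $o_j$ stays above the minimum while $o_i$ strictly rises, so $\min_\ell o_\ell$ increases, again contradicting optimality and giving \eqref{eq: o_equal}.

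For the backward direction (\eqref{eq: P_equal} and \eqref{eq: o_equal} $\Rightarrow$ optimality) I would show the two conditions pin down the objective value, and then invoke the forward direction. Take any two feasible points $\{k_i^\ast\}$ and $\{\hat k_i\}$ satisfying both conditions; by \eqref{eq: o_equal} each has a single common $o$-value, say $o^\ast$ and $\hat o$. If $o^\ast>\hat o$ then $o(x,\theta_i,k_i^\ast)>o(x,\theta_i,\hat k_i)$ for every $i$, so strict monotonicity of $o$ forces $k_i^\ast<\hat k_i$, and then strict monotonicity of $p$ forces $p_i^\ast<\hat p_i$; summing with the positive weights $P(\theta_i)$ gives $\sum_i P(\theta_i)p_i^\ast<\sum_i P(\theta_i)\hat p_i$, contradicting that both equal $1-\epsilon_f$ by \eqref{eq: P_equal}. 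The symmetric argument excludes $o^\ast<\hat o$, so $o^\ast=\hat o$: every point satisfying both conditions has the same $\min_i o_i$. Since the forward direction places the maximizer among these points, they are all maximizers.

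The step I expect to be the main obstacle is the equalization/tightness perturbation, because it silently assumes the optimum is interior ($k_i>0$). At the boundary a mode with $k_i=0$ already attains its largest possible value $o_i=-\nabla\phi\,\mu_f(x,\theta_i)$ and cannot be raised, so if such a mode is the unique bottleneck the minimum is capped there and \eqref{eq: o_equal} can fail at an optimum. I would therefore either add the hypothesis that the budget $1-\epsilon_f$ is large enough (together with $\rho(x,\theta_i)>0$) to force every $k_i>0$, so that all perturbations stay feasible, or treat boundary modes as a separate case; making the exchange argument fully rigorous, keeping $\sum_i P(\theta_i)p_i$ fixed while strictly improving the minimum inside the feasible region $k_i\ge 0$, is the crux of the whole proof.
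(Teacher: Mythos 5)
Your proof follows essentially the same route as the paper's: the same slack-removal and budget-preserving exchange perturbations for the forward direction, and the same monotonicity contradiction (any two points satisfying \eqref{eq: P_equal} and \eqref{eq: o_equal} must share the same objective value, so by the forward direction all are maximizers) for the backward direction. Your closing caveat about boundary optima is a genuine refinement the paper silently skips, though it only bites when some mode has $P(\theta_i) < \epsilon_f$, since $k_i = 0$ forces $p_i = 0$ and feasibility of \eqref{eq:bilevel_prob} then requires $1 - P(\theta_i) > 1-\epsilon_f$; when every $P(\theta_i) > \epsilon_f$ all feasible points are interior and both proofs go through as written.
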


Given \cref{lem:sufficient}, $k_i$ that maximizes RHS of \eqref{eq:bilevel_safe} and derives the least conservative safe control can be found by \cref{algo: binary}. The proof follows.

\begin{lem}
    A set of $k_i$ that satisfies \eqref{eq: P_equal} and \eqref{eq: o_equal} can be found by binary search on $k_1$, which has a constant time complexity $O(\log(10/\epsilon_0))$, where $\epsilon_0$ is the desired precision of $k_i$.
    \begin{proof}
        Given arbitrary $k_1$, we can enforce the satisfaction of \eqref{eq: o_equal} by directly computing $k_i$ for all $i$ because $o(x,\theta_i,k_i)$ is a linear function of $k_i$. Furthermore, $o(x,\theta_i,k_i)$ is a monotonic decreasing function of $k_i$, and because $p_i(k_i)$ is a monotonic increasing function of $k_i$, $\sum_i P(\theta_i)p_i$ is a monotonic decreasing function of $k_1$.
    \end{proof}
\end{lem}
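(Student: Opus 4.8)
The plan is to collapse the $n$ design variables $(k_1,\dots,k_n)$ onto the single scalar $k_1$ via \eqref{eq: o_equal}, and then to show the resulting aggregate-probability map is continuous and strictly monotone in $k_1$, so that \eqref{eq: P_equal} becomes a scalar equation with a unique root that bisection pins down at a geometric rate. First I would make the reduction explicit: fixing $k_1$ fixes the common value $c := o(x,\theta_1,k_1)$ that \eqref{eq: o_equal} forces every $o(x,\theta_i,k_i)$ to equal. Since $o(x,\theta_i,k_i) = -\nabla\phi\,\mu_f(x,\theta_i) - k_i\,\rho(x,\theta_i)$ is affine in $k_i$ with nonzero slope $-\rho(x,\theta_i) < 0$, each equation $o(x,\theta_i,k_i)=c$ admits the unique solution
\[
k_i(k_1) = \frac{-\nabla\phi\,\mu_f(x,\theta_i) - c}{\rho(x,\theta_i)},
\]
which is exactly the ``Solve $k_i$ for all $i$'' step of \cref{algo: binary}. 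I set $\Pi(k_1) := \sum_i P(\theta_i)\,p_i\!\left(k_i(k_1)\right)$, so that \eqref{eq: P_equal} reads $\Pi(k_1)=1-\epsilon_f$.

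The key step is the monotonicity of $\Pi$, obtained by chaining three elementary facts. Increasing $k_1$ strictly decreases the target $c=o(x,\theta_1,k_1)$, because $\rho(x,\theta_1)>0$; lowering the common target $c$ strictly increases every $k_i(k_1)$, because each $o(x,\theta_i,\cdot)$ is strictly decreasing; and $p_i(k_i)={\chi^2_1}^{-1}(k_i^2)$ is strictly increasing in $k_i$ on $k_i\ge 0$. Composing, $\Pi$ is continuous and strictly increasing. Hence $\Pi(k_1)=1-\epsilon_f$ has at most one root, and the update rule of \cref{algo: binary} --- shrink $r_{k1}$ when $\Pi>1-\epsilon_f$ (the target is exceeded, so $k_1$ can be reduced) and raise $l_{k1}$ otherwise --- is precisely the correct bisection step for a strictly increasing function.

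It then remains to bracket the root and read off the complexity. For $k_1$ large the target $c$ is very negative, so every $k_i(k_1)$ is large and positive and each $p_i\to 1$, giving $\Pi(10)>1-\epsilon_f$ for any practical $\epsilon_f$; at the low end $\Pi$ lies below $1-\epsilon_f$ (at $k_1=0$ one has $p_1=0$ and the remaining $p_i$ small in the regime of high required safe probability), so strict monotonicity and the intermediate value theorem place a unique root in $(0,10)$. Each bisection step halves an interval of initial length $10$, so attaining precision $\epsilon_0$ takes $N$ steps with $10\cdot 2^{-N}\le\epsilon_0$, i.e. $N=O(\log(10/\epsilon_0))$ iterations, each performing a constant number of closed-form evaluations (one per mode). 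The main obstacle is the monotonicity argument of the second paragraph: one must track the sign carefully through the composition $k_1\mapsto c\mapsto k_i\mapsto p_i$ to rule out any reversal, and must confirm that the $k_i$ returned by \eqref{eq: o_equal} stay in the regime $k_i\ge 0$ where $p_i(\cdot)$ is genuinely monotone, since a $k_i<0$ would make $p_i={\chi^2_1}^{-1}(k_i^2)$ decreasing in $k_i$ and break the one-to-one correspondence between $k_i$ and $p_i$ on which the whole argument rests.
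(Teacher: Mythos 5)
Your proof is correct and follows essentially the same route as the paper's: solve \eqref{eq: o_equal} in closed form for each $k_i$ given $k_1$ (using linearity of $o$ in $k_i$), chain the monotonicities $k_1 \mapsto o(x,\theta_1,k_1) \mapsto k_i \mapsto p_i$, and bisect on $[0,10]$. One notable difference: your direction of monotonicity (the aggregate probability $\sum_i P(\theta_i)p_i(k_i(k_1))$ is \emph{increasing} in $k_1$) is the correct one --- the paper's proof says ``monotonic decreasing,'' a sign slip that is inconsistent with the update rule of \cref{algo: binary} (which shrinks $r_{k1}$ precisely when the sum exceeds $1-\epsilon_f$) --- and the caveat you flag about keeping $k_i \geq 0$ so that $p_i(k_i)={\chi^2_1}^{-1}(k_i^2)$ is genuinely monotone is a real gap that the paper's proof also leaves unaddressed.
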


\begin{figure}
    \centering
    \includegraphics[width=1.\linewidth]{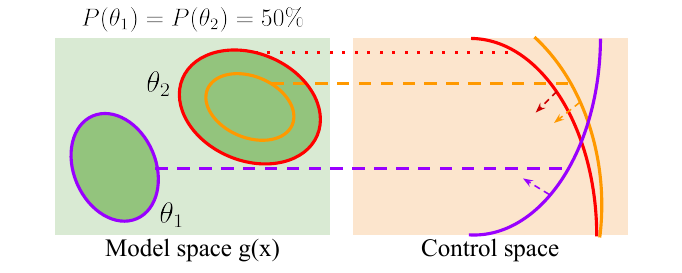}
    \caption{Safety constraints under multiplicative uncertainty. Each dynamic model bound corresponds to a \textbf{nonlinear} constraint in the control space. Therefore, it is difficult to maximize the feasible control set. But we can always find a locally least conservative safe control by numerical methods.}
    \label{fig:multiplicative}
\end{figure}

\subsection{Multi-Modal Multiplicative Uncertainties}\label{sec: mul}
Now we consider the case when $f(x)$ and $g(x)$ both have uncertainties.
In the additive case, we consider the $p_i$ confidence bound of $f(x,\theta_i)$. However, when dealing with multiplicative uncertainties, both $f(x,\theta_i)$ and $g(x,\theta_i)$ are subject to uncertainty. We assume $f(x,\theta_i)$ and $g(x,\theta_i)$ are independent. To ensure the safe constraint can be satisfied with the probability $p_i$, we have to consider a $p_i^f$ confidence bound of $f(x,\theta_i)$ and a $p_i^g$ confidence bound of $g(x,\theta_i)$, such that $p_i^f p_i^g = p_i$. For simplicity, let $p_i^f = p_i^g = \sqrt{p_i}$.

\cite{wei2022persistently} proves that, in the multiplicative Gaussian uncertainty case,  $P\left[\dot\phi(x,u) \leq -\gamma(\phi(x)) \mid \theta=\theta_i \right] > p_i$ can be realized by a second-order cone constraint:
\begin{align}
    ||L_i(p_i)^T u||\leqslant -\mu_i^T u + c_i(p_i)
\end{align}
where $L_i(p_i) L_i(p_i)^T =  \chi_{nm}^{2}(\sqrt{p_i}) \nabla \phi(x)^T \Sigma_g(x,\theta_i) \nabla \phi(x)$ ($L_i(p_i)$ can be solved by Cholesky decomposition), $\mu_i := \nabla \phi(x) \cdot \mu_g(x,\theta_i)$, $c_i(p_i) := -\gamma(\phi(x)) - \nabla\phi(x) \mu_f(x,\theta_i) - \sqrt{\chi^2_{1}(\sqrt{p_i})} \rho(x,\theta_i)$. Then the multi-modal safe control problem can be formulated as
\begin{subequations}
\begin{align}
     \min_{u\in U, p_1,\cdots,p_n} & \|u - u_{\text{ref}}\|^2 
 \label{eq:bilevel_obj_mul}\\  
    \st &  \sum_i P(\theta_i) p_i \geq 1-\epsilon_f ,\label{eq:bilevel_prob_mul}\\
    & ||L_i(p_i)^T u||\leqslant -\mu_i^T u + c_i(p_i),\forall i\label{eq:bilevel_safe_mul}.
\end{align}\label{eq:bilevel_mul}
\end{subequations}
It is difficult to directly find the least conservative constraint as we did in the additive case because the constraint is nonlinear as shown in \cref{fig:multiplicative}. However, we can solve \eqref{eq:bilevel_mul} with a bi-level method. At the upper level, we optimize $p_i$ with numerical methods, while at the lower level, we determine $u$ through second-order cone programming (SOCP). To be specific, we begin by sampling a set of $p_i$ that satisfies \eqref{eq:bilevel_prob_mul}. Subsequently, \eqref{eq:bilevel_mul} reduces to a SOCP, allowing $u$ to be optimally solved. With $u$, we then compute the gradient of \eqref{eq:bilevel_obj_mul} with respect to $p_i$. The gradient information enables finding near-optimal ${p_i}$ with numerical optimization methods.




\subsection{Safety Index Synthesis with Probabilistic Feasibility Guarantee}

We first present theories to provide a probabilistic guarantee of feasibility given a safety index $\phi$, then we show how to optimize the safety index to improve the feasibility.

\begin{lem}[Sampling guarantee]\label{lem:sampling}
    Suppose 1. we uniformly sample $N$ times from $X$; 2. given a $\phi$, $N_f$ samples have feasible safe control and $N_n$ samples have no feasible safe control. Then the percent of feasible states, denoted by $q$, has the distribution as follows for $z \in (0,1)$
    \begin{align}
        P(q=z \mid N_f, N_n) = \frac{z^{N_f+\alpha-1}(1-z)^{N_n+\beta-1}}{\mathrm{~B}(N_f+\alpha, N_n+\beta)}
    \end{align}
    where $\alpha,\beta$ are chosen to reflect any existing belief or information on q ($\alpha =1$ and $\beta =1$ would give a uniform prior distribution). $B(\alpha, \beta)$ is the Beta function acting as a normalising constant:
    \begin{align}
        B(\alpha, \beta) = \frac{(\alpha-1)!(\beta-1)!}{(\alpha+\beta-1)!}.
    \end{align}
\end{lem}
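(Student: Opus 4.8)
The plan is to recognize this statement as the classical Beta--Binomial conjugacy result from Bayesian inference and to derive the posterior via a direct application of Bayes' theorem. First I would fix the probabilistic model. Let $q \in (0,1)$ denote the true (unknown) fraction of the state space $X$ that admits feasible safe control under the given $\phi$. Because the $N = N_f + N_n$ samples are drawn uniformly from $X$, each sample is an independent Bernoulli trial that is feasible with probability exactly $q$; consequently the observed count $N_f$ of feasible samples follows a Binomial distribution conditioned on $q$, with likelihood
\begin{align}
    P(N_f, N_n \mid q = z) = \binom{N}{N_f} z^{N_f}(1-z)^{N_n}.
\end{align}

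Next I would encode the prior belief on $q$ as a Beta density with shape parameters $\alpha, \beta$,
\begin{align}
    P(q = z) = \frac{z^{\alpha - 1}(1-z)^{\beta - 1}}{B(\alpha, \beta)},
\end{align}
where $\alpha = \beta = 1$ recovers the uniform prior mentioned in the statement. I would then apply Bayes' theorem: the posterior density of $q$ is proportional to the product of the likelihood and the prior. Since the combinatorial factor $\binom{N}{N_f}$ and the constant $B(\alpha,\beta)$ are independent of $z$, they are absorbed into the normalizing constant, leaving
\begin{align}
    P(q = z \mid N_f, N_n) \propto z^{N_f + \alpha - 1}(1-z)^{N_n + \beta - 1}.
\end{align}

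To finish, I would recognize this as the kernel of a $\mathrm{Beta}(N_f + \alpha,\, N_n + \beta)$ distribution and recover the exact normalizing constant by integrating over $z \in (0,1)$, using the defining identity of the Beta function, $\int_0^1 z^{a-1}(1-z)^{b-1}\,dz = B(a,b)$. Dividing the kernel by $B(N_f + \alpha, N_n + \beta)$ then yields precisely the claimed expression, and the stated factorial form of $B$ follows from its relation to the Gamma function at integer arguments.

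The only genuine subtlety — rather than a real obstacle — lies in the modeling step: justifying that uniform sampling yields i.i.d. $\mathrm{Bernoulli}(q)$ trials whose success probability equals $q$. This requires reading $q$ as the measure-theoretic (e.g. Lebesgue) proportion of feasible states in $X$, so that a uniformly drawn sample is feasible with probability exactly $q$. Once this interpretation is fixed, the remainder is the routine conjugate-prior computation and the identification of the Beta normalizing constant, so I do not expect any technical difficulty beyond stating the model cleanly.
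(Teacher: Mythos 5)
Your proposal is correct and follows essentially the same route as the paper's proof: both model feasibility of a uniform sample as a Bernoulli trial with success probability $q$, place a $\mathrm{Beta}(\alpha,\beta)$ conjugate prior on $q$, and apply Bayes' theorem to obtain the $\mathrm{Beta}(N_f+\alpha, N_n+\beta)$ posterior. Your explicit remark that $q$ must be read as the measure-theoretic proportion of feasible states is a welcome clarification that the paper leaves implicit, but it does not change the argument.
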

\begin{proof}
    Consider the feasibility of a sample as a random variable. This is a typical Bernoulli process with an unknown probability of success $q \in [0,1]$. This random variable will follow the binomial distribution. The usual conjugate prior~\cite{bernardo2009bayesian} is 
    \begin{align}
        P(q = z) & = \frac{z^{\alpha-1} (1-z)^{\beta-1}}{B(\alpha, \beta)}
    \end{align}
    
    Given $q=z$, the probability of $N_f$ samples having feasible safe control and $N_n$ samples having no feasible safe control is
    \begin{align}
        P(N_f, N_n \mid q = z) & = \left(\begin{array}{c}N_f+N_n \\ N_f\end{array}\right) z^{N_f}(1-z)^{N_n}
    \end{align}

    With $N_f$ and $N_n$ known, the probability of $q=z$, \ie the posterior probability, can be calculated as

    \begin{align}
        P(q=z \mid N_f, N_n) & =\frac{P(N_f, N_n \mid z) P(z)}{\int P(N_f, N_n \mid y) P(y) d y}\\
        & =\frac{z^{N_f+\alpha-1}(1-z)^{N_n+\beta-1}}{\mathrm{~B}(N_f+\alpha, N_n+\beta)}
    \end{align}  
\end{proof}
\begin{cor}
    The cumulative distribution function $P(q < z \mid N_f, N_n)$ is the regularized incomplete Beta function \cite{bernardo2009bayesian}:
    \begin{align}
        P(q<z \mid N_f, N_n) = \frac{B(z; N_f+\alpha, N_n+\beta)}{B(N_f+\alpha, N_n+\beta)}.
    \end{align}
    where
    \begin{align}
        B(z; \alpha, \beta) = \int_0^z y^{\alpha-1}(1-y)^{\beta-1} dy.
    \end{align}
\end{cor}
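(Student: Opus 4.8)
The plan is to derive the CDF by direct integration of the posterior density established in \cref{lem:sampling}. By the definition of the cumulative distribution function, I would start from
\begin{align}
    P(q < z \mid N_f, N_n) = \int_0^z P(q = y \mid N_f, N_n)\, dy
\end{align}
and substitute the Beta posterior density $P(q=y\mid N_f,N_n) = y^{N_f+\alpha-1}(1-y)^{N_n+\beta-1}/B(N_f+\alpha,N_n+\beta)$ supplied by the lemma.

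The key step is then to observe that the normalizing constant $B(N_f+\alpha, N_n+\beta)$ is independent of the integration variable $y$, so it factors out of the integral, leaving
\begin{align}
    P(q < z \mid N_f, N_n) = \frac{1}{B(N_f+\alpha, N_n+\beta)} \int_0^z y^{N_f+\alpha-1}(1-y)^{N_n+\beta-1}\, dy.
\end{align}
By the definition $B(z;\alpha,\beta) := \int_0^z y^{\alpha-1}(1-y)^{\beta-1}\,dy$ given in the statement, the remaining integral is precisely $B(z; N_f+\alpha, N_n+\beta)$ once I identify the shape parameters as $N_f+\alpha$ and $N_n+\beta$; substituting yields the claimed formula.

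Because the argument reduces to a single elementary integration together with parameter bookkeeping, I do not expect any substantive obstacle. The only point that warrants a sentence of care is confirming that the factored-out constant is genuinely the correct normalizer: this holds since the complete Beta function equals the incomplete Beta function evaluated at the upper endpoint, i.e. $B(N_f+\alpha,N_n+\beta) = B(1; N_f+\alpha, N_n+\beta)$, which also verifies the sanity check that the CDF tends to $1$ as $z \to 1$.
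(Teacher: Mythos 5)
Your proposal is correct: integrating the Beta posterior density from \cref{lem:sampling} over $[0,z]$ and factoring out the normalizing constant is exactly the (implicit) argument the paper relies on, since it states the corollary as an immediate consequence of the lemma. The closing sanity check that $B(N_f+\alpha, N_n+\beta) = B(1; N_f+\alpha, N_n+\beta)$ is a nice touch, and nothing more is needed.
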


\begin{exmp}
    Suppose all samples are feasible, that is, $N=N_f=100000$, $N_n=0$, given an unbiased uniform distribution prior ($\alpha=1, \beta=1$), we can derive that $P(\text{more than } 99.99\% \text{ states are feasible}) > 99.99\%$.
\end{exmp}

\begin{rem}
    In reality, some states are visited more frequently than others, such as a stable equilibrium. In this case, the feasibility at these states is more important than other states. To reflect this nature in the safety index, we may adjust the sampling strategy. We can sample trajectories instead of single states using a given nominal or safe controller denoted by $\pi$. Then applying the same analysis as above, we can derive $P(\text{more than } 99.99\% \text{ states \textbf{to be visited by $\pi$} are feasible})$.
\end{rem}


Now we can formulate the safety index synthesis as the following problem
\begin{align*}
    \max_{\eta} P(\text{more than } 99.99\% \text{ states are feasible with } \phi_{\eta}).
\end{align*}
where $\eta$ is the hyperparameters of the safety index.

Since the probability is calculated by sampling, it is not differentiable with respect to the parameters of the safety index. Therefore we apply a derivative-free evolutionary algorithm CMA-ES\rre{~\cite{wei2022safe}}, which iteratively optimizes $\eta$ by evaluating current $\eta$ candidates and proposing new $\eta$ candidates from the best performers. 
This method works best for low-dimensionally parameterized safety index.

\section{Experiment} \label{sec: exp}

We test our Multi-Modal Robust Safe Set Algorithm (Multi-modal RSSA) on a Segway robot. We consider a tracking task with a safety specification on the tilt angle: $\phi_0 = |\varphi| - 0.1$ as shown in \cref{fig:segway}. We consider a parameterized safety index proposed by~\cite{liu2014control}: $\phi = \max\{\phi_0,-0.1^\alpha + |\varphi|^\alpha + k_v\text{sign}(\varphi)  \dot{\varphi} + \beta$\}, where $\alpha, k_v, \beta$ are learnable parameters. The nominal controller is designed to maintain $\dot{p}$ at $1m/s$. 


\begin{figure}
    \centering
    \includegraphics[width=0.5\linewidth]{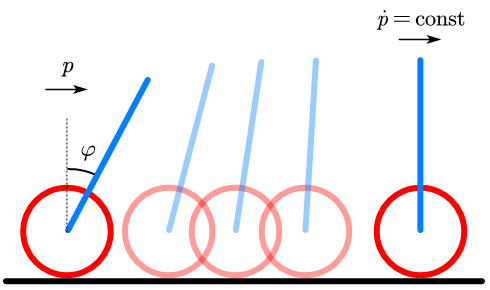}
    \caption{Segway Robot}
    \label{fig:segway}
\end{figure}

\subsection{Dynamics of the Segway Robot}

Given the wheel's position $p$ and the frame's tilt angle $\varphi$, we define $q = [p, \varphi]^T$ and $\dot{q} = [\dot{p}, \dot{\varphi}]^T$, then the state $x=[q, \dot q]^T$. Segway's dynamic model can be written as
\begin{align}
    \underbrace{\frac{d}{dt}\left[ \begin{array}{c}
        	q\\
        	\dot{q}\\
        \end{array} \right]}_{\dot x} =& \underbrace{\left[ \begin{array}{c}
        	\dot{q}\\
        	-M\left( q \right) ^{-1}H\left( q,\dot{q} \right) \\
        \end{array} \right]}_{f(x)}+\underbrace{\left[ \begin{array}{c}
        	0\\
        	M\left( q \right) ^{-1}B\\
        \end{array} \right]}_{g(x)} u \nonumber
\end{align}

where $M(q)$, $H(q, \dot{q})$, $B$ and $b_t$ are defined as follows: 
\begin{subequations}
\begin{align}
    M\left( q \right) &=\left[ \begin{matrix}
        	m_0&		mL\cos \left( \varphi \right)\\
        	mL\cos \left( \varphi \right)&		J_0\\
        \end{matrix} \right] \\
    H\left( q,\dot{q} \right) &= \left[ \begin{array}{c}
        	-mL\sin \left( \varphi \right) \dot{\varphi}^2+\frac{b_t}{R}\left( \dot{p}-R\dot{\varphi} \right)\\
        	-mgL\sin \left( \varphi \right) -b_t\left( \dot{p}-R\dot{\varphi} \right)\\
        \end{array} \right] \\
    B &= \left[ \begin{array}{c}
        	\frac{K_m}{R}\\
        	-K_m\\
        \end{array} \right]\\
    b_t &= K_m\frac{K_b}{R}
\end{align}
\end{subequations}

\subsection{Safe Control under Multi-Modal Uncertainties}

We illustrate the capability of Multi-modal RSSA on the Segway robot under both additive and multiplicative uncertainty. We compare our method with the state-of-the-art method for uni-modal uncertainty (Ellipsoid-RSSA)~\cite{wei2022persistently}. For a fair comparison, we set $\epsilon_f = 0.01$ for both methods. We use a manually designed safety index $\phi_h$ ($\alpha=1.0, k_v =1.0, \beta =0.001$). 

For additive uncertainty, we assume there is an additive noise $d$ added to $f$, that is $\dot x = f(x) + d + g(x) u$ where $d \sim \mathcal{N} (\mu_d(\theta_i), \Sigma_d(\theta_i) )$. In the Segway case, d is a 4-dimensional random variable. Its two modes are
\begin{subequations}
\begin{align}
    P(\theta_1)&=0.8\\
    \mu_d(\theta_1) &= \begin{bmatrix}
                        0.1 & -0.1 & 0.1 & -0.1
                        \end{bmatrix} \\
    \Sigma_d(\theta_1)&=
                    \begin{bmatrix}
                        0.18 & 0 & 0 & 0 \\
                        0 & 0.18 & 0 & 0.1 \\
                        0 & 0 & 0.18 & 0 \\
                        0 & 0.1 & 0 & 0.18\\
                    \end{bmatrix}
    \\
    P(\theta_2)&=0.2 \\
    \mu_d(\theta_2) &= \begin{bmatrix}
                            0.1 & -0.1 & 0.2 & -7.0
                    \end{bmatrix} \\   
    \Sigma_d(\theta_2)&=
                    \begin{bmatrix}
                        0.1 & 0 & 0 & 0 \\
                        0 & 0.1 & 0 & -0.05 \\
                        0 & 0 & 0.1 & 0 \\
                        0 & -0.05 & 0 & 0.1\\
                    \end{bmatrix}
\end{align}
\end{subequations}
As shown in \cref{fig:exp-add}, the baseline approximates the uncertainty with a uni-modal confidence bound. However, our method provides a significantly reduced confidence bound in the gradient direction ($\nabla \phi$) by considering multi-modal distribution, which enlarges the feasible safe control set.

\begin{figure}
    \centering
    \subfigure[Confidence bounds]{
        \centering
        \includegraphics[width=0.45\linewidth]{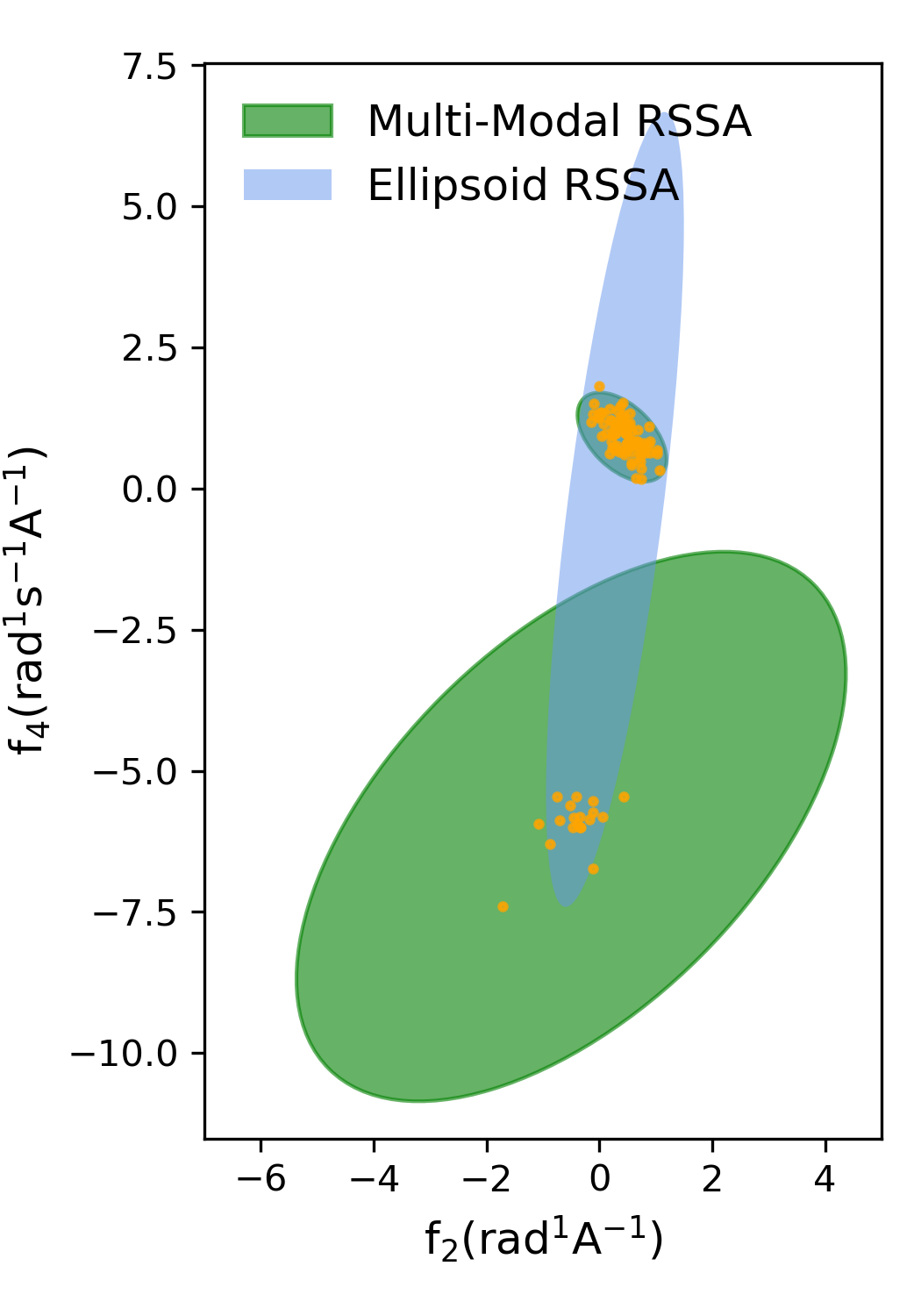}
    }
    \subfigure[Safe Control Sets]{
        \centering
        \includegraphics[width=0.45\linewidth]{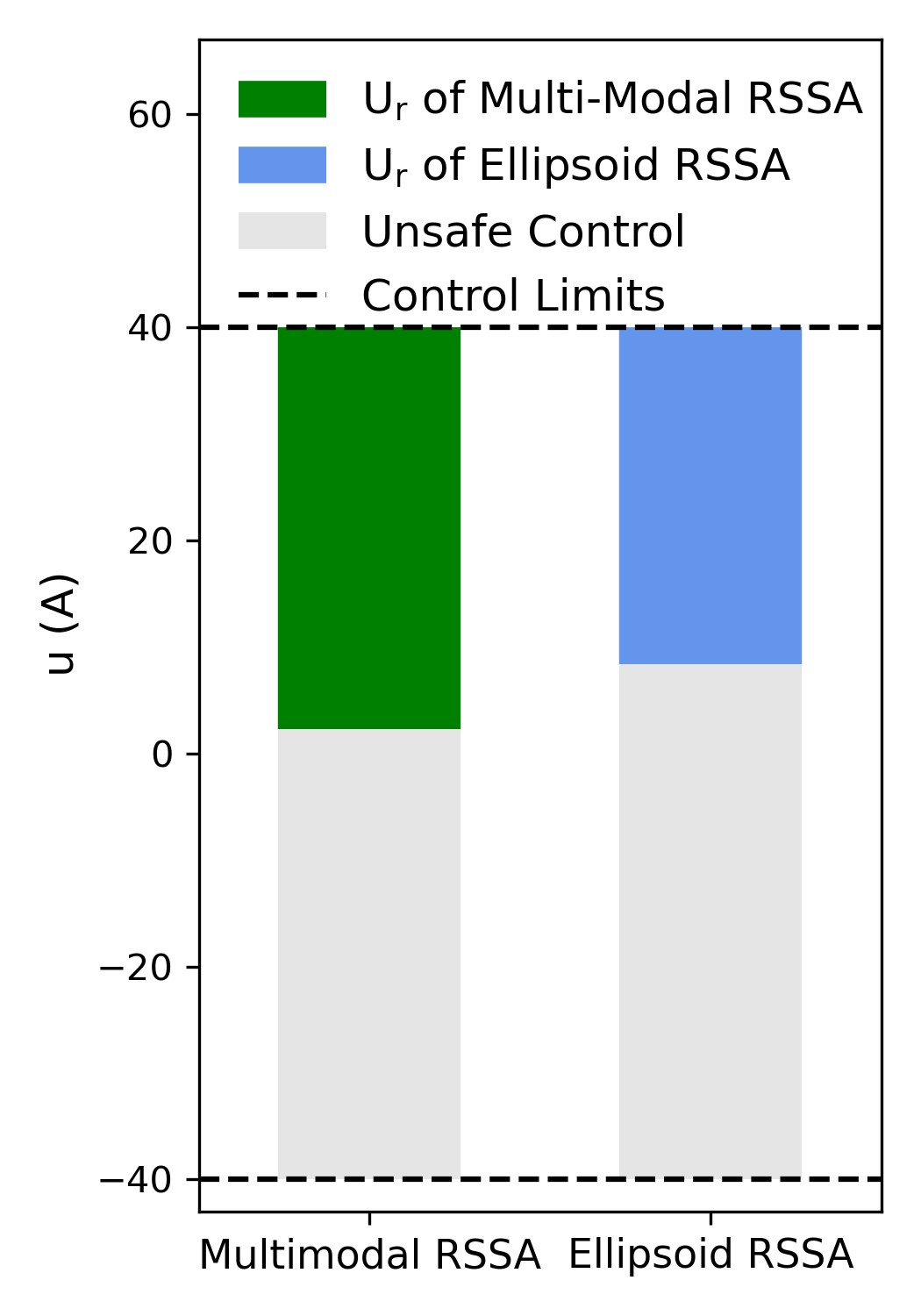}
    }
    \caption{Dynamic model of the Segway robot under additive uncertainty and the corresponding safe control sets. (a) shows confidence bounds for a 2D slice of $f(x)$ (in the second and fourth components $f_2$ and $f_4$ respectively) given by our method and the baseline. The orange points represent dynamic model samples. The ellipsoid bound of the lower mode given by Multi-Modal RSSA is large, so the upper ellipsoid bound can be smaller, which leads to a reduced uncertainty bound in the gradient direction ($\nabla \phi$) and consequently a large safe control set. (b) shows the corresponding robust safe control set $U_r$. Our method gives the largest possible $U_r$.}
    \label{fig:exp-add}
\end{figure}

For multiplicative uncertainty, we assume the motor torque constant $K_m$ follows a multi-modal Gaussian distribution: $K_m(\theta_i) \sim \mathcal{N} (\mu_k(\theta_i), \sigma_k(\theta_i)^2 )
$. Its two modes are $P(\theta_1) = 0.8$, $\mu_k(\theta_1)=2.4, \sigma_k(\theta_1) = 0.05$, and $P(\theta_2) = 0.2$, $\mu_k(\theta_2) = 4.2, \sigma_k(\theta_2) = 0.2$. After transformation, the dynamic model uncertainty follows a state-dependent multi-modal Gaussian distribution, $f(x,\theta_i)\sim \mathcal{N}(\mu_f(x,\theta_i), \Sigma_f(x,\theta_i))$ and $g(x,\theta_i)\sim \mathcal{N}(\mu_g(x,\theta_i), \Sigma_g(x,\theta_i))$.
The experiment results are shown in \cref{fig:exp-multi}. Our method provides a much tighter confidence bound of model uncertainties and achieves a much less conservative safe control than the baseline assuming uni-modal uncertainty.

\begin{figure}
    \centering
    \subfigure[Confidence bounds]{
        \centering
        \includegraphics[width=0.45\linewidth]{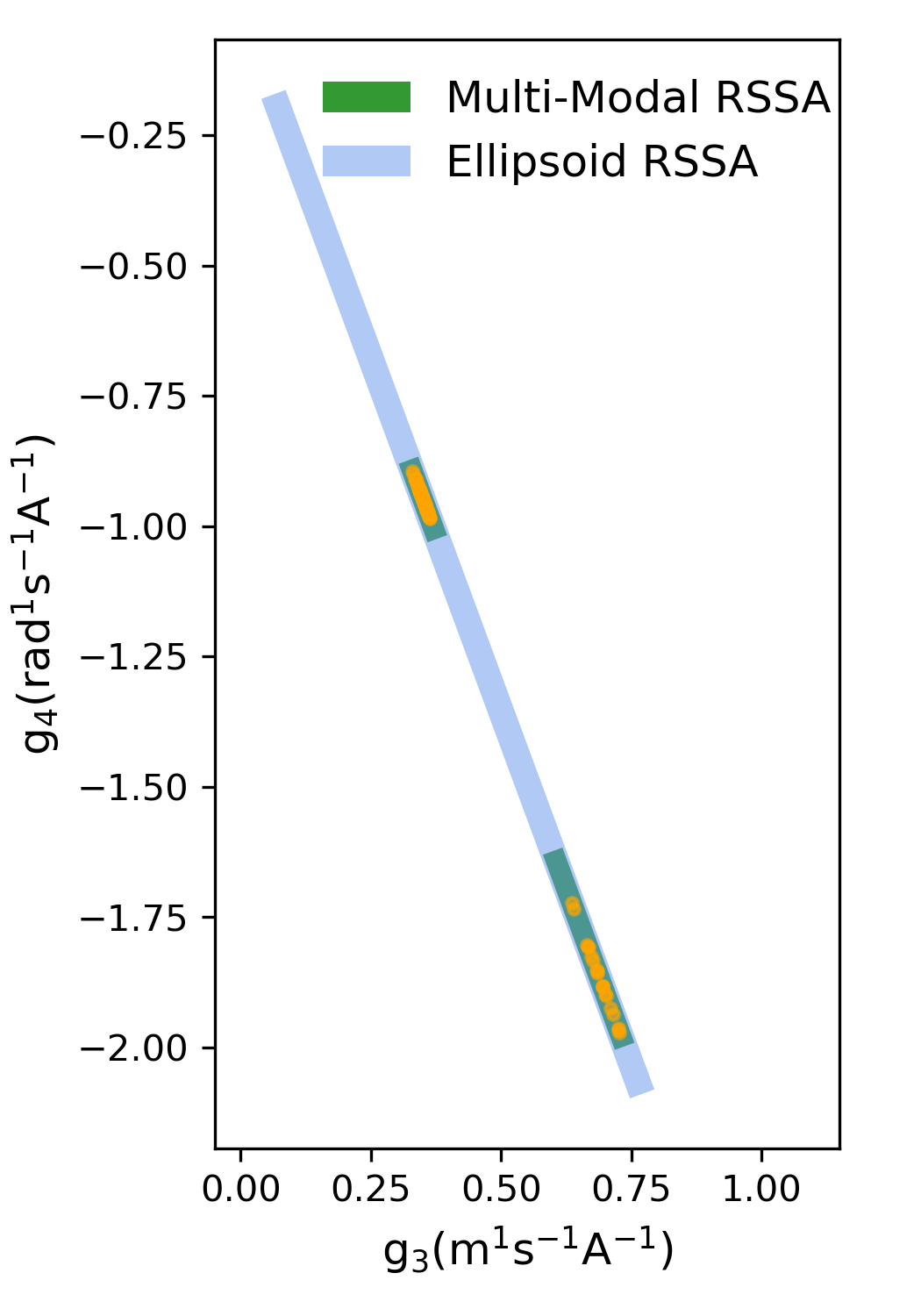}
    }
    \subfigure[Safe Control Sets]{
        \centering
        \includegraphics[width=0.45\linewidth]{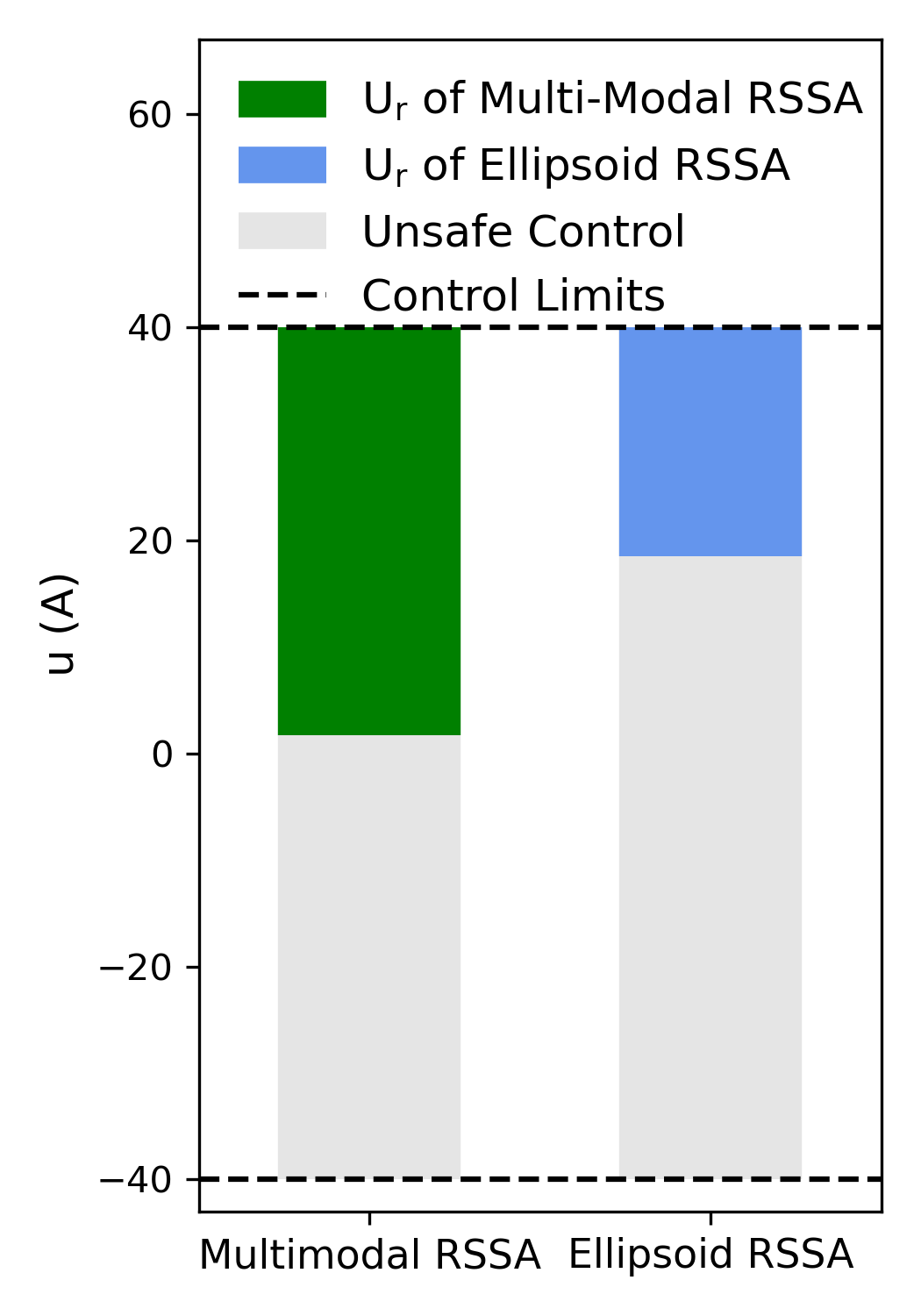}
    }
    \caption{Dynamic model of the Segway robot under multiplicative uncertainty and the corresponding safe control sets. (a) shows confidence bounds for a 2D slice of $g(x)$ (in the third and fourth components $g_3$ and $g_4$ respectively) given by our method and the baseline. The orange points represent dynamic model samples. Notably, these samples lie in a straight line due to the nature of the dynamic model. While the bounds provided by RSSA should technically be line segments without widths, we represent them as rectangle areas for clearer visualization.  Our method gives a much tighter bound than single-modal methods. (b) shows the corresponding robust safe control set $U_r$. Our method gives a much larger $U_r$.}
    \label{fig:exp-multi}
\end{figure}

\subsection{Safety Index Synthesis}

We show that our method for learning the safety index parameters ensures feasibility for the Segway robot under multiplicative uncertainty.
The search ranges for the parameters of the safety index are: $\alpha \in (0.1, 5.0) ,k_v\in ( 0.1, 5.0 ) ,\beta \in ( 0.001, 1.0 ) $. We uniformly sample $250000$ states in the whole state space. 
\Cref{fig: PR-SI-learning} compares $\phi_0$, a manually tuned safety index $\phi_h$ ($\alpha=1.0, k_v =1.0, \beta =0.001$), and a synthesized PR-SI $\phi_l$ with the Multiplicative Multi-modal RSSA solver ($\alpha=0.15, k_v =4.17, \beta = 0.55$). $\phi_l$ achieves a $0$ infeasible rate, according to which we can conclude that $P(\text{more than } 99.99\% \text{ states are feasible}) > 99.9999\%$.

\section{Discussion}

In this work, we proposed a framework for robust safe control for multi-modal uncertain dynamic models. The experiments showed that our framework can synthesize a safety index that ensures feasibility with high probability and can achieve less conservative safe control than the previous method under multi-modal uncertain dynamic models. 

Future work includes addressing non-Gaussian uncertainties, considering the correlation between $f(x, \theta_i)$ and $g(x, \theta_i)$, and varying allocations of probabilities over the confidence bounds of $f(x, \theta_i)$ and $g(x, \theta_i)$, as opposed to simply using $\sqrt{p_i}$.

\begin{figure}[tb]
\centering
\subfigure[$\phi_0$]{
\centering
\includegraphics[height=2.63cm]{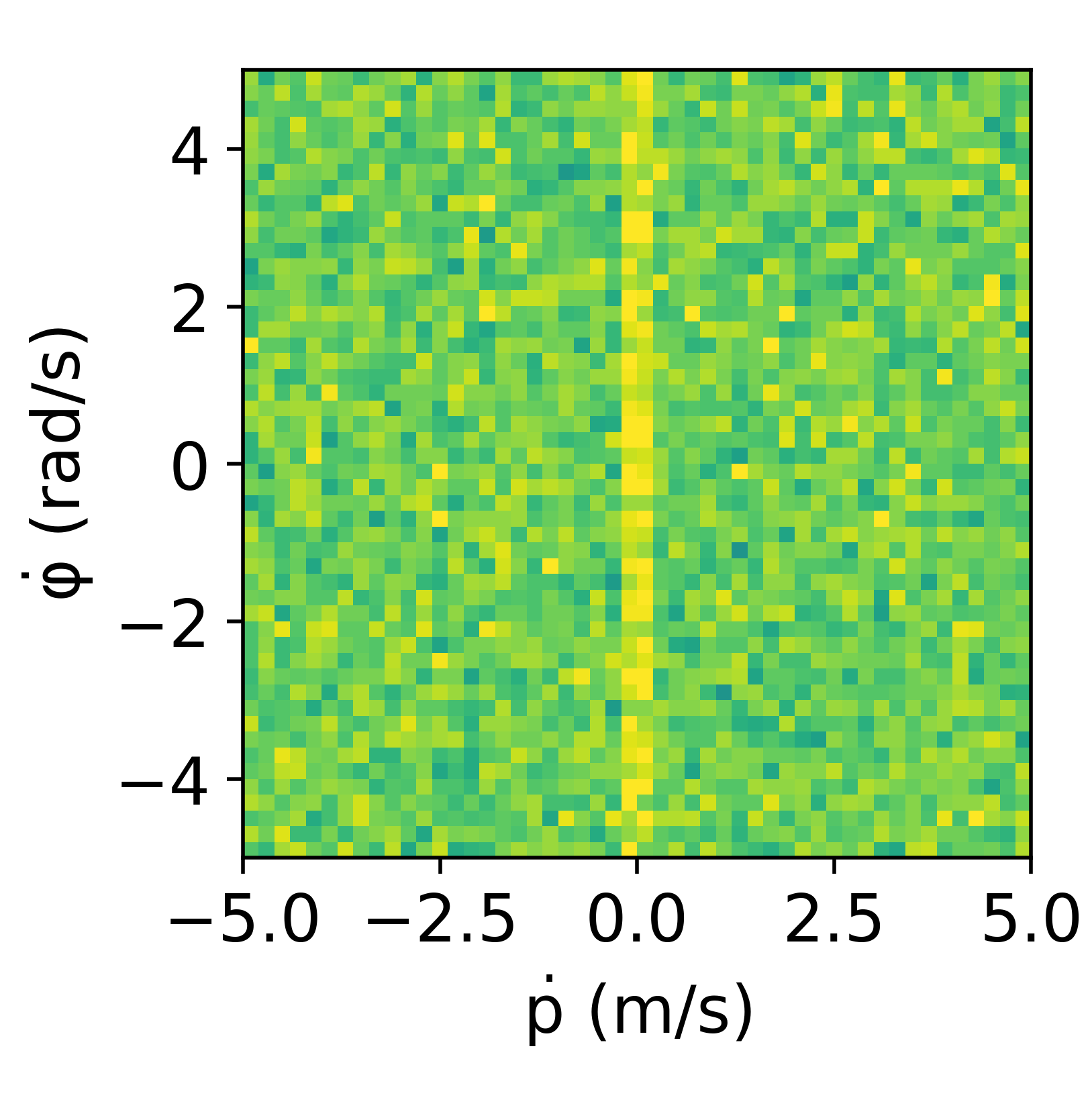}
}
\subfigure[$\phi_h$]{
\centering
\includegraphics[height=2.63cm]{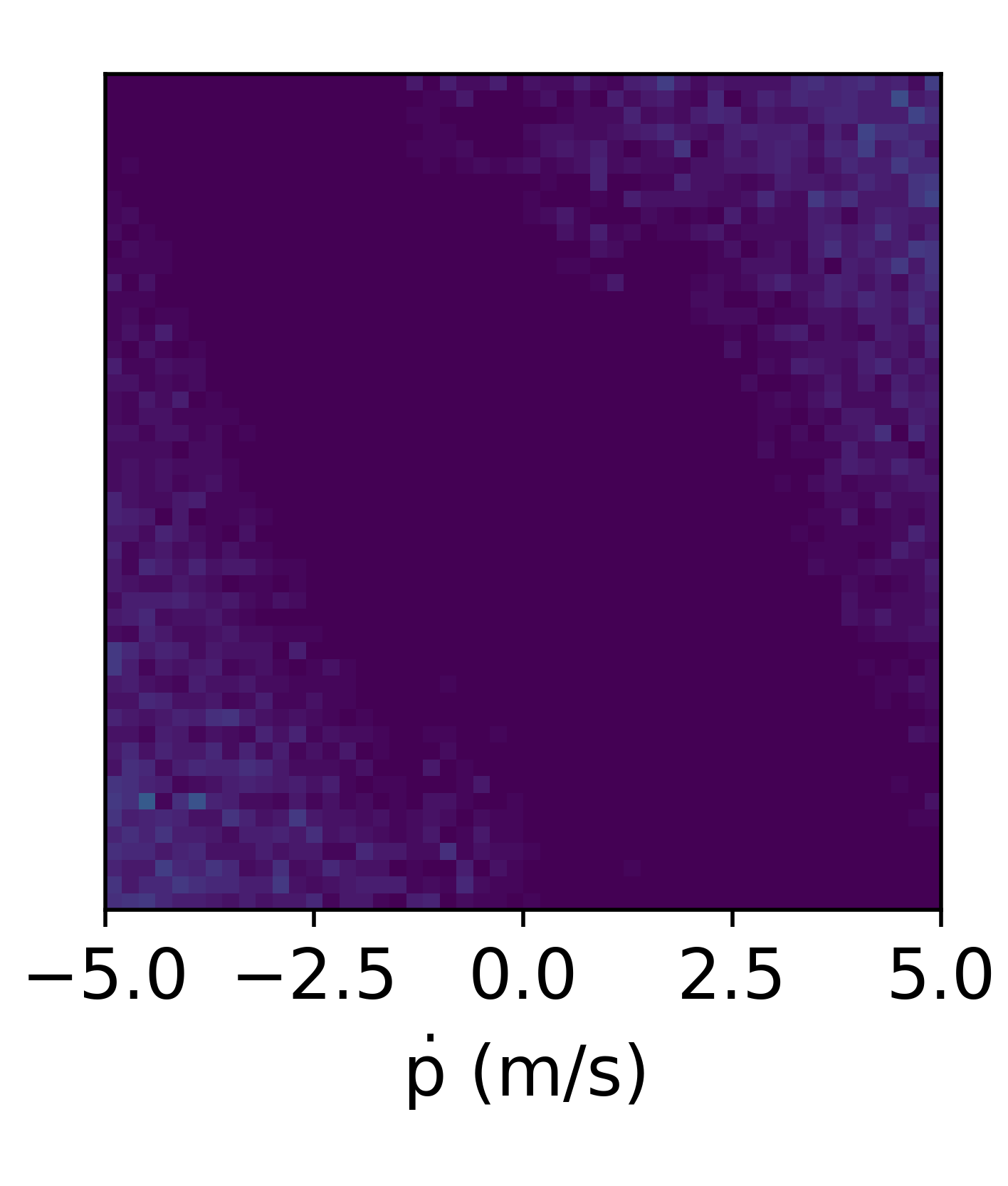}
}
\subfigure[$\phi_l$]{
\centering
\includegraphics[height=2.69cm]{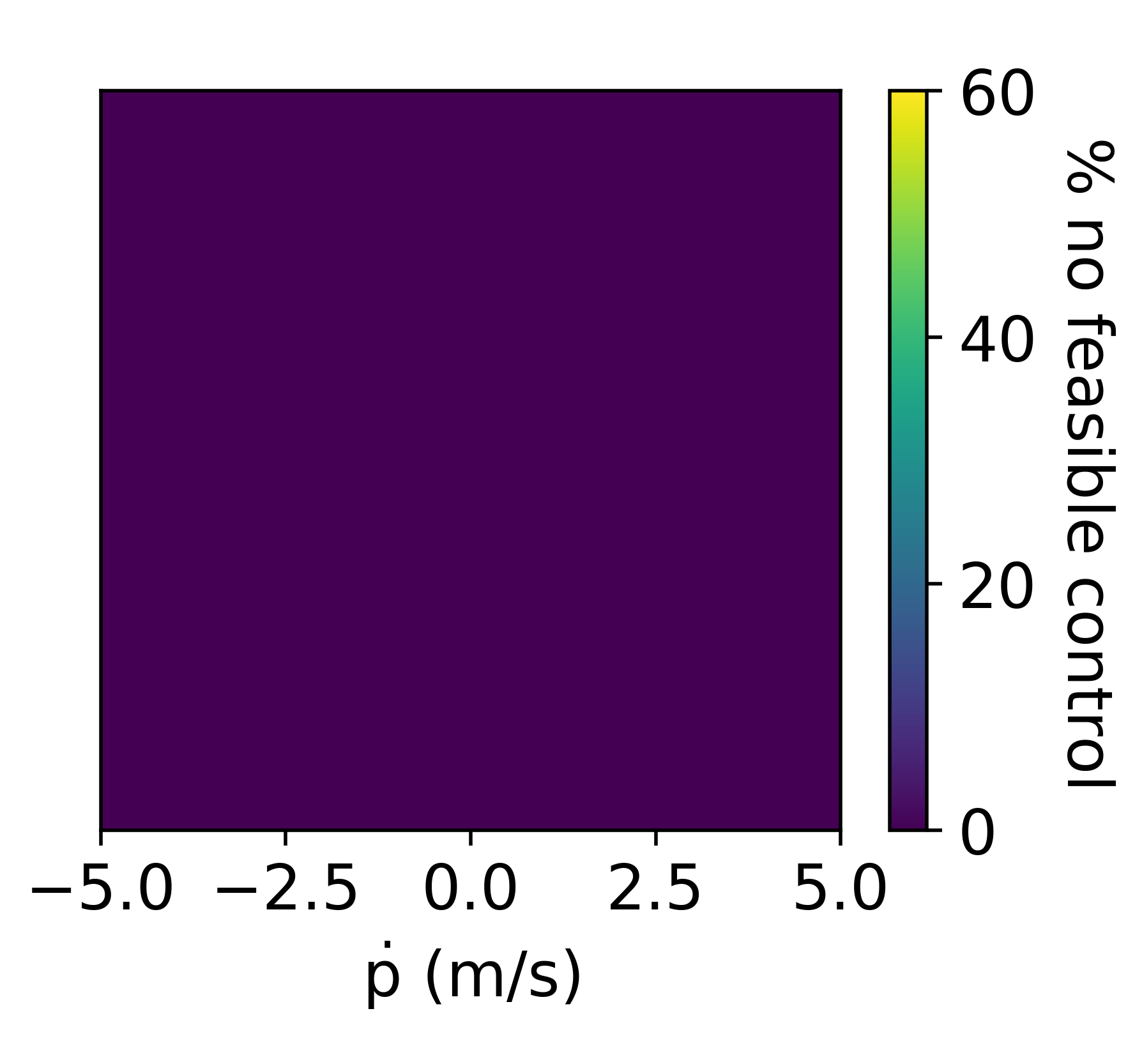}
}
\centering
\caption{\small Distribution of infeasible states in the robot configuration space of three safety indexes. Each point of the graph represents $(\dot{p}, \dot{\varphi})$. We uniformly sample 100 $\varphi$ values at each $(\dot{p}, \dot{\varphi})$ point (we do not sample $p$ because $p$ has no influence on feasibility). The color denotes how many of $\varphi$ at each point have no feasible robust safe control. (a) shows that the original safety index $\phi_0$ cannot ensure feasibility. (b) shows that a hand-designed safety index $\phi_h$ also has many infeasible states. (c) shows that our learned PR-SI $\phi_l$ ensures feasibility for all states even considering multi-modal model uncertainty.}
\label{fig: PR-SI-learning}
\end{figure}

\bibliographystyle{IEEEtran}
\bibliography{reference}

\begin{thebibliography}{10}
\providecommand{\url}[1]{#1}
\csname url@samestyle\endcsname
\providecommand{\newblock}{\relax}
\providecommand{\bibinfo}[2]{#2}
\providecommand{\BIBentrySTDinterwordspacing}{\spaceskip=0pt\relax}
\providecommand{\BIBentryALTinterwordstretchfactor}{4}
\providecommand{\BIBentryALTinterwordspacing}{\spaceskip=\fontdimen2\font plus
\BIBentryALTinterwordstretchfactor\fontdimen3\font minus
  \fontdimen4\font\relax}
\providecommand{\BIBforeignlanguage}[2]{{%
\expandafter\ifx\csname l@#1\endcsname\relax
\typeout{** WARNING: IEEEtran.bst: No hyphenation pattern has been}%
\typeout{** loaded for the language `#1'. Using the pattern for}%
\typeout{** the default language instead.}%
\else
\language=\csname l@#1\endcsname
\fi
#2}}
\providecommand{\BIBdecl}{\relax}
\BIBdecl

\bibitem{liu2014control}
C.~Liu and M.~Tomizuka, ``Control in a safe set: Addressing safety in
  human-robot interactions,'' in \emph{ASME 2014 Dynamic Systems and Control
  Conference}.\hskip 1em plus 0.5em minus 0.4em\relax American Society of
  Mechanical Engineers Digital Collection, 2014.

\bibitem{wei2019safe}
T.~Wei and C.~Liu, ``Safe control algorithms using energy functions: A uni ed
  framework, benchmark, and new directions,'' in \emph{2019 IEEE 58th
  Conference on Decision and Control (CDC)}.\hskip 1em plus 0.5em minus
  0.4em\relax IEEE, 2019, pp. 238--243.

\bibitem{wei2022safe}
------, ``Safe control with neural network dynamic models,'' in \emph{Learning
  for Dynamics and Control Conference}.\hskip 1em plus 0.5em minus 0.4em\relax
  PMLR, 2022, pp. 739--750.

\bibitem{zhao2023safety}
W.~Zhao, T.~He, T.~Wei, S.~Liu, and C.~Liu, ``Safety index synthesis via
  sum-of-squares programming,'' in \emph{2023 American Control Conference
  (ACC)}.\hskip 1em plus 0.5em minus 0.4em\relax IEEE, 2023, pp. 732--737.

\bibitem{wei2023zero}
T.~Wei, S.~Kang, R.~Liu, and C.~Liu, ``Zero-shot transferable and persistently
  feasible safe control for high dimensional systems by consistent
  abstraction,'' in \emph{2023 {{IEEE}} 62th {{Conference}} on {{Decision}} and
  {{Control}} ({{CDC}})}.\hskip 1em plus 0.5em minus 0.4em\relax IEEE, 2023.

\bibitem{chen2022safe}
H.~Chen, S.~Feng, Y.~Zhao, C.~Liu, and P.~A. Vela, ``Safe hierarchical
  navigation in crowded dynamic uncertain environments,'' in \emph{2022 IEEE
  61st Conference on Decision and Control (CDC)}.\hskip 1em plus 0.5em minus
  0.4em\relax IEEE, 2022, pp. 1174--1181.

\bibitem{zhao2023probabilistic}
W.~Zhao, T.~He, and C.~Liu, ``Probabilistic safeguard for reinforcement
  learning using safety index guided gaussian process models,'' in
  \emph{Learning for Dynamics and Control Conference}.\hskip 1em plus 0.5em
  minus 0.4em\relax PMLR, 2023, pp. 783--796.

\bibitem{wei2022persistently}
T.~Wei, S.~Kang, W.~Zhao, and C.~Liu, ``Persistently feasible robust safe
  control by safety index synthesis and convex semi-infinite programming,''
  \emph{IEEE Control Systems Letters}, vol.~7, pp. 1213--1218, 2022.

\bibitem{garg2021robust}
K.~Garg and D.~Panagou, ``Robust control barrier and control lyapunov functions
  with fixed-time convergence guarantees,'' in \emph{2021 American Control
  Conference (ACC)}.\hskip 1em plus 0.5em minus 0.4em\relax IEEE, 2021, pp.
  2292--2297.

\bibitem{jankovic2018robust}
M.~Jankovic, ``Robust control barrier functions for constrained stabilization
  of nonlinear systems,'' \emph{Automatica}, vol.~96, pp. 359--367, Oct. 2018.

\bibitem{grover2022control}
J.~S. Grover, C.~Liu, and K.~Sycara, ``Control barrier functions-based
  semi-definite programs (cbf-sdps): Robust safe control for dynamic systems
  with relative degree two safety indices,'' \emph{arXiv preprint
  arXiv:2208.12252}, 2022.

\bibitem{castaneda2021pointwise}
F.~Casta{\~n}eda, J.~J. Choi, B.~Zhang, C.~J. Tomlin, and K.~Sreenath,
  ``Pointwise feasibility of gaussian process-based safety-critical control
  under model uncertainty,'' in \emph{2021 60th IEEE Conference on Decision and
  Control (CDC)}.\hskip 1em plus 0.5em minus 0.4em\relax IEEE, 2021, pp.
  6762--6769.

\bibitem{liu2016algorithmic}
C.~Liu and M.~Tomizuka, ``Algorithmic safety measures for intelligent
  industrial co-robots,'' in \emph{2016 IEEE International Conference on
  Robotics and Automation (ICRA)}.\hskip 1em plus 0.5em minus 0.4em\relax IEEE,
  2016, pp. 3095--3102.

\bibitem{bernardo2009bayesian}
J.~M. Bernardo and A.~F. Smith, \emph{Bayesian theory}.\hskip 1em plus 0.5em
  minus 0.4em\relax John Wiley \& Sons, 2009, vol. 405.

\end{thebibliography}



\end{document}